\documentclass[journal,twoside,web]{ieeecolor}
\usepackage{generic}
\usepackage{cite}
\DeclareSymbolFont{timescal}{OMS}{ztmcm}{m}{n}
\DeclareSymbolFontAlphabet{\mathcal}{timescal}
\usepackage{amsmath,amssymb,amsfonts}
\usepackage{mathtools}
\usepackage[mathscr]{eucal}
\usepackage{upgreek}
\DeclareMathAlphabet{\mathpzc}{OT1}{pzc}{m}{it}
\newtheorem{assumption}{Assumption}
\newtheorem{theorem}{Theorem}
\usepackage{algorithmic}
\usepackage{graphicx}
\usepackage{algorithm,algorithmic}
\usepackage{hyperref}
\hypersetup{hidelinks=true}
\usepackage{textcomp}

\def\BibTeX{{\rm B\kern-.05em{\sc i\kern-.025em b}\kern-.08em
    T\kern-.1667em\lower.7ex\hbox{E}\kern-.125emX}}
\begin{document}
\title{ Learning Neural Feedback Linearization for Data-driven Systems via Augmented Lagrangian}
\author{Lakshmi Priya P. K. , Andreas Schwung
\thanks{Lakshmi Priya P. K. and Andreas Schwung are with the Department of Automation Technology and Learning Systems, South Westphalia University of Applied Sciences,Lübecker Ring 2, Soest, 59494, Germany (pondicherrykumar.lakshmipriya@fh-swf.de; schwung.andreas@fh-swf.de). }
}
\maketitle
\begin{abstract}
The paper proposes a novel data-driven framework for designing and training a feedback linearizing controller by explicitly incorporating relative degree based conditions into the learning process. This enables the conventional feedback controller components to be replaced by neural Lie derivatives, thereby facilitating a fully data-driven feedback linearization framework. Furthermore, practical closed-loop stability is established by deriving sufficient conditions under which bounded identification errors lead to bounded tracking errors. The derived theoretical results are validated through their application to an armature controlled DC motor.
\end{abstract}

\begin{IEEEkeywords}
Closed-loop stability, data-driven control systems, feedback-linearization, neural Lie derivatives
\end{IEEEkeywords}

\section{Introduction}
\label{sec:introduction}

Feedback linearization is a well established nonlinear control technique that converts a nonlinear dynamical system into an equivalent linear representation through a suitable change of coordinates and a nonlinear control law. This allows for systematic controller design, stability analysis, and performance evaluation using established linear systems theory. Consequently, feedback linearization has been widely applied to nonlinear systems characterized by complex dynamical behavior \cite{b1}. However, its practical implementation relies on accurate knowledge of the system model, which may not be readily available for complex intricate systems.

\par Data-driven approaches address this challenge by using measurement data directly for controller synthesis and analysis \cite{k0}. While this reduces reliance on explicit physical equations, it introduces additional challenges in experiment design, model structure selection, parameter estimation, and interpretation \cite{s0,s1}. Early pioneering research in data-driven control (DDC) can be found in \cite{z1,h0,a2,s4}. 

\par Due to recent major advancements in machine learning, data-driven modeling have gained considerable attention and widespread adoption \cite{m1,v1}. Neural networks provide an effective framework for data-driven modeling as flexible function approximators capable of capturing complex nonlinear system dynamics directly from measured data \cite{d1}. Their differentiable structure also enables to jointly learn the system representation and construct controller directly from data \cite{k1, s2, u1, y1}. The increasing availability of measured process data has also motivated data-driven feedback linearization approaches \cite{n1}. These methods retain advantages of feedback linearization while extending its applicability to systems with unknown or difficult to derive governing dynamics.
 
\par Since analytical tools in nonlinear control theory are grounded in differential-geometric concepts, the corresponding feedback linearization requires the learned neural network model to preserve the structural properties necessary for enforcing relative degree condition. However, an approach which enforces the relative degree condition during training, and links learning errors to closed-loop tracking performance is missing in the literature.  Existing data-driven feedback linearization methods either learn predictive models without explicitly incorporating the geometric structure required for feedback linearization or recover linearization transformations from predefined dictionaries using sparse regression \cite{b2,f2,h3}. Furthermore, stability of the resulting closed-loop  controller is generally not addressed, particularly in the presence of bounded identification errors.

We overcome these limitations by introducing a data-driven feedback linearization framework using multilayer perceptron (MLP) neural network. The differential geometric structure are incorporated directly into the learning process through an Augmented-Lagrangian formulation. The feedback-linearizing controller is directly defined by the networks, promoting consistency between model and structural requirements of the controller design. A practical stability result is established showing that bounded identification errors lead to ultimately bounded tracking errors, providing a closed-loop stability characterization for the proposed learning-based control.

\subsubsection{Related work}
\par The existing literature can be broadly divided into direct and indirect control design. Indirect approaches first identify a system model and subsequently use it for controller design \cite{b2}. Sparse identification and Koopman-based approaches establish structured data-driven representations of nonlinear dynamics, providing a systematic foundation for subsequent control synthesis \cite{h3}. Koopman generator least-squares formulations recover linearizing state and control transformations from prescribed function dictionaries \cite{g2}. The integration of learning techniques with feedback linearization has also been investigated using Gaussian processes \cite{u1}. Robustness of sparse identification with feedback linearization and an additional observer for finite-time compensation of modeling errors and disturbances is studied in \cite{r1}. The authors of \cite{y1} use neural networks to approximate the unknown system online which are used to construct the feedback-linearizing controller which ensures uniform ultimate boundedness of all closed-loop signals via Lyapunov analysis.

\par In direct data-driven control, the controller is obtained directly from measured data without first identifying an explicit model of the plant \cite{p2,m2}. This approach reduces the intermediate modeling step and can be useful when the main objective is controller synthesis rather than model reconstruction. It supports the idea that system trajectories can be represented directly from data under persistency of excitation \cite{w2}.  Neural networks were initially used as nonlinear identifiers and adaptive control models, with subsequent extensions to architectures tailored towards controller synthesis \cite{t1,k1}. Representative approaches include quasi-ARX models and control-affine neural NARX models \cite{h2,x1}. In \cite{s3}, the authors employed a neural network to derive state and control transformations, while \cite{w1} introduced an approach based on reinforcement learning. Sampled-data stabilization has been established for single-input single-output (SISO) systems without explicit model identification \cite{f2}, whereas behavioral formulations provide data-based trajectory representations for multiple-input multiple-output (MIMO) feedback-linearizable systems under basis-function approximation and noisy output measurements \cite{a1}. A data-driven dynamic output-feedback controller was developed to robustly stabilize linear MIMO systems using only noisy input–output measurements \cite{l2}. Data-driven output-feedback stabilization under measurement noise and neglected nonlinearities has been studied through auxiliary input–output representations and data-dependent linear matrix inequalities, although not within a feedback-linearization framework \cite{d2}. A related result studies data-driven feedback linearization using stacked regression with functional dictionaries \cite{l1}.
\subsubsection{Contributions}
Current literature on data-driven control approaches either learn system representations without explicitly preserving the differential geometric structure required for feedback linearization or derive linearizing transformations while closed-loop behavior in the presence of identification errors remains insufficiently characterized. To address these limitations, we make the following contributions:
\begin{itemize}
\item We propose a data-driven feedback-linearization architecture for unknown nonlinear systems by using neural networks to approximate the system dynamics.
\item We formulate a constrained optimization framework to enforce the required relative degree condition during training which is reformulated as an unconstrained optimization problem using augmented Lagrangian multiplier technique.
\item A practical closed-loop stability result is established showing that bounded identification error in the learned vector fields leads to ultimately bounded tracking error and the learning feedback-linearizing control architecture remains stable.
\item The proposed learning-based feedback-linearization framework and the resulting closed-loop stability is validated on an armature-controlled DC motor.
\end{itemize}
The article is organized as follows: Section 2 reviews the fundamental concepts and introduces the neural identification framework. Section 3 presents the main theoretical results, i.e. the data-driven feedback linearizing control using an augmented Lagrangian framework. Section 4 establishes practical closed-loop stability. Section 5 applies the approach to an armature-controlled DC motor. Section 6 concludes the paper.

\section{Basic concepts and Preliminaries}
In this section, we recall the foundational concepts underlying in identification and feedback control of model free systems using neural networks. In particular, we discuss how a MLP architecture can be constructed to model a nonlinear control system solely from the process data. 

\subsection{Data Collection and Model Construction}
Consider the following nonlinear control affine system 
\begin{align} \label{1}
    \dot{\mathrm{x}}(\mathpzc{t}) &= \mathsf{f}(\mathrm{x}(\mathpzc{t})) +\mathsf{g}(\mathrm{x}(\mathpzc{t}))\mathsf{u}(\mathpzc{t}), \nonumber \\ 
\mathsf{y}(\mathpzc{t}) &= \mathsf{c}(\mathrm{x}(\mathpzc{t})).
\end{align}
The state $\mathrm{x}(\mathpzc{t}) \in \mathscr{R}^{n},$ $\mathsf{u}(\mathpzc{t})\in \mathscr{R}^{n_u}$ is the control input and $\mathsf{y} (\mathpzc{t})\in \mathscr{R}^{n_y}$ is the output of the system. The system is considered in the SISO setting, with $n_y=n_{u}=1$, and is assumed to have a well defined, known relative degree $r=n$. We note, that the methodology can be generalized to MIMO systems while extension to the case with relative degree $(r=\delta<n)$ for unknown systems has been discussed in \cite{l1} using a sparse regression approach, which can be extended to the setting presented in this paper. A finite number of $m$ data samples of  $\mathrm{x}(\mathpzc{t}),$  $\mathsf{u}(\mathpzc{t})$ and  $\mathsf{y}(\mathpzc{t})$ are collected and organized in a standard form \cite{b2}. The collected process data are represented in the form $\mathrm{X} = \left[\mathrm{x}(\mathpzc{t}_{1}), \ldots,
\mathrm{x}(\mathpzc{t}_{m}) \right]^{T} \in \mathscr{R}^{m\times n},
\dot{\mathrm{X}} = \left[\dot{\mathrm{x}}(\mathpzc{t}_{1}),\ldots,\dot{\mathrm{x}}(\mathpzc{t}_{m})\right]^{T} \in \mathscr{R}^{m\times n},
\mathsf{U} = \left[\mathsf{u}(\mathpzc{t}_{1}), \ldots, \mathsf{u}(\mathpzc{t}_{m}) \right]^{T} \in \mathscr{R}^{m\times 1}, \mathsf{Y}=\left[\mathsf{y}(\mathpzc{t}_{1}), \ldots, \mathsf{y}(\mathpzc{t}_{m}) \right]^{T} \in \mathscr{R}^{m\times 1}$. If direct measurements of the derivatives are not readily available, they may be approximated numerically from the observed state data \cite{k3}.
The quality of the collected data plays a critical role in the predictive performance of the learned model. The data set is assumed to be persistently exciting and to provide sufficient coverage of the considered operating region for reliable identification of the system dynamics. Having constructed the dataset, the subsequent step is to develop the neural network architecture for the considered system.

\subsection{Neural network architecture}

We present the MLP parameterized approximations  $\mathsf{f}_{\upphi}$, $\mathsf{g}_{\upphi}$ and $\mathsf{c}_{\upphi}$ using sampled data. By learning directly from data, the network parameters capture the underlying structure of the system which enables the learned representation to approximate the observed state trajectories of the true system. 
Let
$\mathsf{f}_{\upphi}:\Omega \rightarrow \mathscr{R}^n, \,  \mathsf{g}_{\upphi}:\Omega \rightarrow \mathscr{R}^{n }, \, $ and  $\mathsf{c}_{\upphi}: \Omega  \rightarrow \mathscr{R}^{n_y},\, \Omega \subset \mathscr{R}^n$ is an open subset, then 
\begin{align} \label{6} 
   \hat{ \dot{\mathrm{x}}}(\mathpzc{t})&= \mathsf{f}_{\upphi}(\mathrm{x}(\mathpzc{t})) +\mathsf{g}_{\upphi}(\mathrm{x}(\mathpzc{t}))\mathsf{u}(\mathpzc{t}), \nonumber \\ 
 \hat{\mathsf{y}}(\mathpzc{t})  &= \mathsf{c}_{\upphi}(\mathrm{x}(\mathpzc{t})).
 \end{align}
For  $k=1,2,\cdots,m, \,$ $\hat{ \dot{\mathrm{x}}}_{k}= \mathsf{f}_{\upphi}(\mathrm{x}_{k}) +\mathsf{g}_{\upphi}(\mathrm{x}_{k})\mathsf{u}_{k} \, \in \mathscr{R}^n$ defines the prediction dynamics per sample.
Let $n_{in}$, $n_h$, and $n_o$ denote the number of neurons in the input, hidden, and output layers, respectively. For an $l$-layer MLP let $\mathscr{W}^{(1)},\mathscr{V}^{(1)} \in\mathscr{R}^{n_h\times n_{in}}, \mathscr{W}^{(i)}, \mathscr{V}^{(i)} \in\mathscr{R}^{n_h\times n_h},\; i=2,\ldots,l-1, \mathscr{W}^{(l)}\in\mathscr{R}^{n_o\times n_h}, \mathscr{V}^{(l)} \in \mathscr{R}^{1 \times n_h}, b^{(i)}, a^{(i)} \in\mathscr{R}^{n_h},\; i=1,\ldots,l-1, 
b^{(l)}\in\mathscr{R}^{n_o}, a^{(l)}\in \mathscr{R}$ .
Then for $\mathrm{x}\in\mathscr{R}^{n}$, the drift, input-vector-filed and the output network are represented as
\begin{align*}
\mathsf{f}_{\upphi}(\mathrm{x})
&\!=\!
\mathscr{W}_{\mathsf{f}}^{(l)}\!
\sigma\!\left(
\mathscr{W}_{\mathsf{f}}^{(l-1)}\!
\sigma\!\left(\!
\cdots\!
\sigma\!\left(\!
\mathscr{W}_{\mathsf{f}}^{(1)}\!\mathrm{x}\!
+\!b_{\mathsf{f}}^{(1)}\!
\right)\!
\cdots\!
\right)
\!+\!b_{\mathsf{f}}^{(l-1)}\!
\right)
\!+\!b_{\mathsf{f}}^{(l)}\!, \\
\mathsf{g}_{\upphi}(\mathrm{x})
&\!=\!
\mathscr{W}_{\mathsf{g}}^{(l)}
\!\sigma\!\left(
\mathscr{W}_{\mathsf{g}}^{(l-1)}\!
\sigma\!\left(\!
\cdots\!
\sigma\!\left(\!
\mathscr{W}_{\mathsf{g}}^{(1)}\!\mathrm{x}
\!+\!b_{\mathsf{g}}^{(1)}\!
\right)
\!\cdots\!
\right)
\!+\!b_{\mathsf{g}}^{(l-1)}\!
\right)
\!+\!b_{\mathsf{g}}^{(l)}\!,\\
\mathsf{c}_{\upphi}(\mathrm{x})
&\!=\!
\mathscr{V}^{(l)}
\!\sigma\!\left(
\mathscr{V}^{(l-1)}\!
\sigma\!\left(
\!\cdots\!
\sigma\!\left(
\mathscr{V}^{(1)}\mathrm{x}
\!+\!a^{(1)}
\right)
\!\cdots\!
\right)
\!+\!a^{(l-1)}
\right)
\!+\!a^{(l)}\!.
\end{align*}

Note that forward equations of $\mathsf{c}_{\upphi}(\mathrm{x})$ through layers  can be equivalently represented by
\begin{align*}
z_{\mathsf c}^{(1)}(\mathrm{x})
&\!=\!
\mathscr{V}^{(1)}\mathrm{x}\!+\!a^{(1)}
\!\in\!\mathscr{R}^{n_h},
\,
h_{\mathsf c}^{(1)}(\mathrm{x})
\!=\!
\sigma\!\left(\!z_{\mathsf c}^{(1)}\!(\mathrm{x})\!\right)
\!\!\in\!\mathscr{R}^{n_h},
\\
z_{\mathsf c}^{(i)}(\mathrm{x})
&\!=\!
\mathscr{V}^{(i)}h_{\mathsf c}^{(i-1)}\!(\mathrm{x})
\!+\!a^{(i)}
\!\!\in\!\mathscr{R}^{n_h},
\,
h_{\mathsf c}^{(i)}\!(\mathrm{x})
\!=\!
\sigma\!\left(\!z_{\mathsf c}^{(i)}(\mathrm{x})\!\right)
\!\!\in\! \mathscr{R}^{n_h},
\,
\\
i&\!=\!2,\ldots,l\!-\!1, \quad
\mathsf{c}_{\upphi}(\mathrm{x})
\!=\! \mathscr{V}^{(l)}h_{\mathsf c}^{(l-1)}\!(\mathrm{x})
\!+\!a^{(l)}\!\!
\in\!\mathscr{R}.
\end{align*}

Having defined the neural network representation of the unknown system we define the loss function of \eqref{6} as the empirical mean squared error between the target values and the network predictions: 
\begin{align} \label{10}
    \mathscr{L}(\upphi) &\!=\!\left\| \dot{\mathrm{X}}\!-\!\mathsf{f}_{\upphi}(\mathrm{X}) \!-\! \mathsf{g}_{\upphi}(\mathrm{X})\odot\mathsf{U}\right\|_{2}^{2} \!+\! \lambda \left\|\mathsf{Y}\!-\!\mathsf{c}_{\upphi}(\mathrm{X})\right\|_{2}^{2},
\end{align}
where $\odot$ denotes the Hadamard product. The parameter $\lambda>0,$ is a weighting factor, determines the relative importance of the output prediction error compared to the state prediction error in the combined loss function. For data-driven feedback linearization the optimization \eqref{10} is not sufficient to ensure that the learned model satisfies the structural conditions required for feedback linearization. Therefore, additional constraints have to be introduced to enforce the relative-degree conditions.

\subsection{Feedback linearization}
This section introduces the preliminary concepts of feedback linearization \cite{i1}. Feedback linearization transforms the original nonlinear system using a smooth and invertible coordinate transformation (diffeomorphism)
$\mathpzc{q}:\mathscr{R}^{n}\rightarrow \mathscr{R}^{n}$ by means of Lie derivatives $\mathcal{L}_{\mathsf{a}}\mathsf{b}(\mathrm{x})=\frac{\partial\mathsf{b}(\mathrm{x})}{\partial \mathrm{x}}\,\mathsf{a}(\mathrm{x}) $, yielding a new coordinate frame with linear closed-loop dynamics. 
The integer $r,$  $ 1\leq r \leq n $, for which 
$
\mathcal{L}_{\mathsf{g}} \mathcal{L}_{\mathsf{f}}^{p-1}\,\mathsf{c}(\mathrm{x})=0,\, p=1,\ldots,r-1,
$
and
$
\mathcal{L}_{\mathsf{g}} \mathcal{L}_{\mathsf{f}}^{r-1}\,\mathsf{c}(\mathrm{x})\neq 0,
$ hold is defined as the relative degree of the system. Within the transformed framework, the feedback control is defined as
\begin{align} \label{dif1.1}
    \mathsf{u}
    &\!=\!
    -\dfrac{
        \mathcal{L}_{\mathsf{f}}^{n}\mathsf{c}(\mathrm{x})
        \!+\!k_{n-1}\mathcal{L}_{\mathsf{f}}^{n-1}\mathsf{c}(\mathrm{x})
        \!+\!\cdots
        \!+\!k_{0}\mathsf{c}(\mathrm{x})\!-\!V\mathpzc{w}
    }{
        \mathcal{L}_{\mathsf{g}}\mathcal{L}_{\mathsf{f}}^{\,n-1}\mathsf{c}(\mathrm{x})
    }  
\end{align}
gives the closed-loop input-output dynamics, where the auxiliary input is introduced so that
\[
V \mathpzc{w}
=
\mathsf{y}^{(n)}
+k_{n-1}\mathsf{y}^{(n-1)}
+\cdots
+k_{1}\dot{\mathsf{y}}
+k_{0}\mathsf{y},
\]
$\mathpzc{w}$ is the new external input to the linearized system and $V$ is a prefilter gain. In the proposed framework, conventional control \eqref{dif1.1} is replaced with neural Lie derivative components, enabling the design and training of data-driven controller. 

\section {Data-driven feedback linearization for Composite neural network}

We now formulates the data-driven feedback linearization which is subdivided into a learning stage and a control stage, see Fig.~\ref{fig:pipeline}. In the learning stage, process data collected from the nonlinear plant is used to train MLP representations of the unknown system, minimize the discrepancy between learned model and measured plant data. In parallel, structural conditions to enforce the relative degree are imposed as constraints on the learning problem. To handle the resulting nonconvex constraints, we use an augmented Lagrangian formulation, in which the constraints are incorporated into the learning objective through Lagrange multipliers. In the control stage, the learned quantities are used to construct data-driven feedback-linearizing controller. Proportional–integral action is incorporated to improve the closed-loop tracking performance.
\begin{figure}[t]
    \centering
    \includegraphics[width=0.45\textwidth]{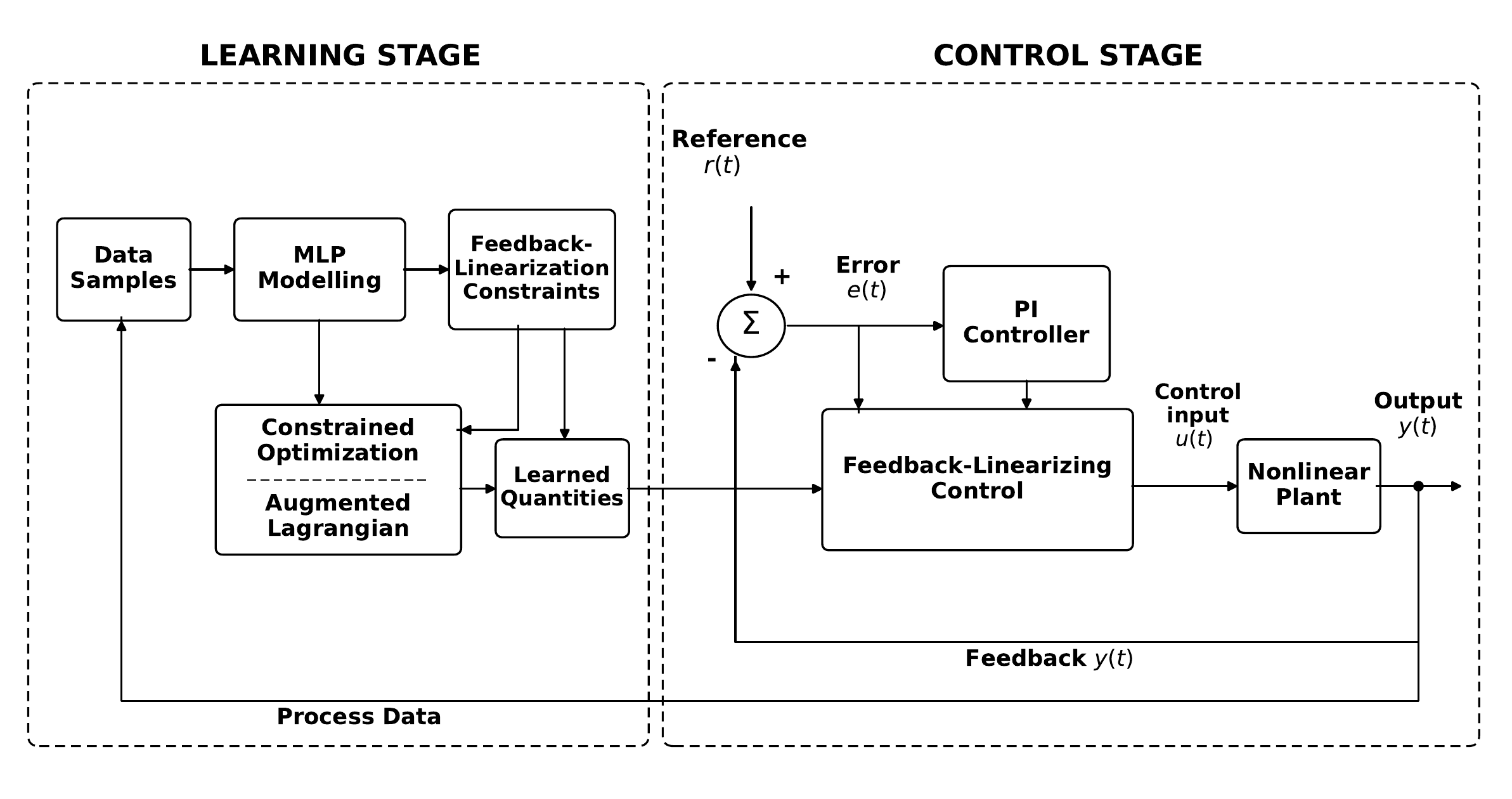}
    \caption{Schematic of the proposed data-driven feedback-linearization framework, comprising a learning stage and a control stage.}
    \label{fig:pipeline}
\end{figure}

\subsection{Feedback-linearization framework}
\par To enable smooth computation of the neural Lie derivatives required for data-driven feedback-linearization, the following assumption is imposed on the activation function: 
\begin{assumption}
   The activation functions satisfy $\sigma\in \mathpzc{C}^{\infty}$.
\end{assumption}
In this work, the sigmoid activation function is chosen to satisfy this regularity condition. We now state the main result:
\begin{theorem} \label{main}
For $\mathrm{x}\in \mathscr{R}^{n}$, consider the data-driven neural network representation of the nonlinear control system $(\ref{6})$ with full relative degree $r=n$. Then, the corresponding learning-based neural feedback controller is given by:
\begin{align}\label{c1}
    \mathsf{u}(\mathrm{x})\!=\! - \dfrac{\theta_{n}(\mathrm{x})+k_{n-1}\theta_{n-1}(\mathrm{x})+\cdots k_{0}\theta_{0}(\mathrm{x})-V\mathrm{w}}{\theta_{n-1}^{'}(\mathrm{x})\mathsf{g}_{\upphi}(\mathrm{x})}.
\end{align}
The function $\theta_{p}(\mathrm{x}),  p=0,1,\cdots,n$, is recursively defined by
\[
\theta_{0}(\mathrm{x})=\mathsf{c}_{\upphi}(\mathrm{x})= \mathscr{V}^{(l)}h_{\mathsf{c}}^{(l-1)}
+a^{(l)}, \hspace{0.1cm}  \theta_{p}(\mathrm{x})=\theta_{p-1}^{'}(\mathrm{x})\mathsf{f}_{\upphi}(\mathrm{x}),
\]
where $\theta_{p}^{'} (\mathrm{x})=\dfrac{\partial \theta_{p} (\mathrm{x}) }{\partial \mathrm{x}}$ denotes the Jacobian and the controller components are determined by the following constrained optimization problem:
\begin{align} \label{3.2}
     &\min_{\upphi}   \left\| \dot{\mathrm{X}}-\mathsf{f}_{\upphi}(\mathrm{X}) - \mathsf{g}_{\upphi}(\mathrm{X})\odot\mathsf{U}\right\|_{2}^{2} + \lambda \left\|\mathsf{Y}-\mathsf{c}_{\upphi}(\mathrm{X})\right\|_{2}^{2},\nonumber\\
      & \text{subject to}\\
     &\theta_{p}^{'} (\mathrm{x}_{k}) \, \mathsf{g_{\upphi}}(\mathrm{x}_{k})\!=\!0, \, \forall \, p \!\in\! \{0,\ldots,n\!-\!2\},  \forall \, k \!\in\! \{1,\ldots,m \}\nonumber.
\end{align}
\end{theorem}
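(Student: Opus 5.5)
The plan is to show that, for the learned vector fields $\mathsf{f}_{\upphi},\mathsf{g}_{\upphi},\mathsf{c}_{\upphi}$ in \eqref{6}, the recursively defined functions $\theta_p$ are exactly the Lie derivatives $\mathcal{L}_{\mathsf{f}_{\upphi}}^{p}\mathsf{c}_{\upphi}$. Then the controller \eqref{c1} is the classical law \eqref{dif1.1} applied to the neural model. Finally, the constraints in \eqref{3.2} are precisely the relative degree conditions for $r=n$, imposed on the data samples.

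\emph{Step 1: well-definedness and smoothness.} Under Assumption 1, each network $\mathsf{f}_{\upphi},\mathsf{g}_{\upphi},\mathsf{c}_{\upphi}$ is a composition of affine maps and the $\mathpzc{C}^\infty$ activation $\sigma$, hence $\mathpzc{C}^\infty$ on $\Omega$. By induction on $p$, each $\theta_p$ is then $\mathpzc{C}^\infty$, so the Jacobians $\theta_p'$ exist for all $p\le n$. Using the layerwise representation of $\mathsf{c}_{\upphi}$, I will write $\theta_0'$ explicitly by the chain rule:
\[
\theta_0'(\mathrm{x})=\mathscr{V}^{(l)}\,\mathrm{diag}\bigl(\sigma'(z_{\mathsf c}^{(l-1)})\bigr)\mathscr{V}^{(l-1)}\cdots\mathrm{diag}\bigl(\sigma'(z_{\mathsf c}^{(1)})\bigr)\mathscr{V}^{(1)}.
\]
The higher $\theta_p'$ follow by differentiating the products $\theta_{p-1}'\mathsf{f}_{\upphi}$. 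This makes the neural Lie derivatives computable, for instance by automatic differentiation.

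\emph{Step 2: identification with Lie derivatives.} By the definition $\mathcal{L}_{\mathsf{a}}\mathsf{b}=\frac{\partial \mathsf{b}}{\partial \mathrm{x}}\mathsf{a}$, induction gives $\theta_p=\mathcal{L}^p_{\mathsf{f}_{\upphi}}\mathsf{c}_{\upphi}$ and $\theta_{p}'\mathsf{g}_{\upphi}=\mathcal{L}_{\mathsf{g}_{\upphi}}\mathcal{L}^{p}_{\mathsf{f}_{\upphi}}\mathsf{c}_{\upphi}$. Differentiating $\hat{\mathsf y}=\theta_0(\mathrm{x})$ along the model \eqref{6} gives
\[
\hat{\mathsf y}^{(p)}=\theta_p+\theta_{p-1}'\mathsf{g}_{\upphi}\mathsf{u}.
\]
Whenever the constraints $\theta_{p-1}'\mathsf{g}_{\upphi}=0$ hold for $p\le n-1$, this reduces to $\hat{\mathsf y}^{(p)}=\theta_p$, and for $p=n$ one gets
\[
\hat{\mathsf y}^{(n)}=\theta_n+\theta_{n-1}'\mathsf{g}_{\upphi}\mathsf{u}.
\]
Substituting \eqref{c1} and assuming $\theta_{n-1}'\mathsf{g}_{\upphi}\neq 0$ (the full relative degree hypothesis) yields
\[
\hat{\mathsf y}^{(n)}+k_{n-1}\hat{\mathsf y}^{(n-1)}+\cdots+k_0\hat{\mathsf y}=V\mathrm{w},
\]
which is the linear closed-loop input–output map of \eqref{dif1.1}.

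\emph{Step 3: the learning problem.} The objective in \eqref{3.2} is the fitting loss \eqref{10}. The equality constraints are the relative degree conditions $\mathcal{L}_{\mathsf{g}_{\upphi}}\mathcal{L}^{p}_{\mathsf{f}_{\upphi}}\mathsf{c}_{\upphi}=0$ for $p=0,\dots,n-2$, evaluated at the data points $\mathrm{x}_k$. So the parameters solving \eqref{3.2} produce a model whose structure is compatible with \eqref{c1}.

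\emph{Main obstacle.} The constraints are enforced only at the samples $\mathrm{x}_k$, whereas Step 2 needs them identically on a region $\Omega$. A continuity or density argument does not deliver exact vanishing off the samples. I would therefore state the result in this form: the conclusion holds exactly on the set where the constraints hold, and approximately elsewhere. The approximation is controlled via Lipschitz bounds on the smooth $\theta_p'\mathsf{g}_{\upphi}$ together with the data coverage assumption, and this residual mismatch feeds into the practical stability analysis of Section 4. Similarly, the condition $\theta_{n-1}'\mathsf{g}_{\upphi}\neq 0$ must be assumed on $\Omega$, or checked there, rather than derived from the optimization.
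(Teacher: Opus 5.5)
Your proposal is correct and follows the same route as the paper. Both compute $\theta_0'=\mathscr{V}^{(l)}S_{\mathsf c}^{(l-1)}\mathscr{V}^{(l-1)}\cdots S_{\mathsf c}^{(1)}\mathscr{V}^{(1)}$ by the chain rule, identify $\theta_p=\mathcal{L}^{p}_{\mathsf{f}_{\upphi}}\mathsf{c}_{\upphi}$ and $\theta_p'\mathsf{g}_{\upphi}=\mathcal{L}_{\mathsf{g}_{\upphi}}\mathcal{L}^{p}_{\mathsf{f}_{\upphi}}\mathsf{c}_{\upphi}$, impose the relative-degree conditions at the samples as the constraints of \eqref{3.2}, and substitute into \eqref{dif1.1}. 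Your explicit closed-loop check and your caveat about sample-wise enforcement (and the sample-only check of $\theta_{n-1}'\mathsf{g}_{\upphi}\neq 0$, which the paper's proof simply asserts for all $\mathrm{x}$) go beyond the paper; note only that exact linearization requires the constraints to hold identically on an open set containing the trajectory, not merely at the points where they vanish.
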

 \begin{proof}
        To establish that system (\ref{6}) is feedback linearizable, it is sufficient to verify that the learned neural representation satisfies the required relative-degree conditions. The proof proceeds by first computing the gradient of the output neural map, which is then used to derive the corresponding neural Lie derivatives characterizing the relative-degree conditions. These are subsequently incorporated as constraints into the learning procedure. Once these conditions are enforced, the resulting learned quantities are used to construct the data-driven feedback-linearizing control law.
        
        Since the governing dynamics of the system are expressed as a composition of nested nonlinear transformation, the state $\mathrm{x}$ is embedded within the layers of the MLP. Consequently, the gradient must be calculated by propagating into the network’s internal chain of weights and activations. For  the output function
\[
\mathsf{c}_{\upphi}(\mathrm{x})
=
\mathscr{V}^{(l)}h_{\mathsf{c}}^{(l-1)}(\mathrm{x})
+a^{(l)},
\]
the gradient computed using the chain rule is 
\begin{align}\label{3.4}
\frac{\partial \mathsf{c}_{\upphi}(\mathrm{x})}
{\partial \mathrm{x}}
&=
\frac{\partial \mathsf{c}_{\upphi}(\mathrm{x})}
{\partial h_{\mathsf{c}}^{(l-1)}(\mathrm{x})}
\frac{\partial h_{\mathsf{c}}^{(l-1)}(\mathrm{x})}
{\partial z_{\mathsf{c}}^{(l-1)}(\mathrm{x})}
\frac{\partial z_{\mathsf{c}}^{(l-1)}(\mathrm{x})}
{\partial h_{\mathsf{c}}^{(l-2)}(\mathrm{x})}
\cdots \nonumber \\
& \quad \times \frac{\partial z_{\mathsf{c}}^{(2)}(\mathrm{x})}
{\partial h_{\mathsf{c}}^{(1)}(\mathrm{x})}
\frac{\partial h_{\mathsf{c}}^{(1)}(\mathrm{x})}
{\partial z_{\mathsf{c}}^{(1)}(\mathrm{x})}
\frac{\partial z_{\mathsf{c}}^{(1)}(\mathrm{x})}
{\partial \mathrm{x}}
 \\
&=
\mathscr{V}^{(l)}
\sigma'\!(z_{\mathsf{c}}^{(l-1)}(\mathrm{x}))
\mathscr{V}^{(l-1)}
\cdots
\mathscr{V}^{(2)}
\sigma'\!(z_{\mathsf{c}}^{(1)}(\mathrm{x}))
\mathscr{V}^{(1)}.\nonumber
\end{align}
For $z_{\mathsf{c}}^{(i)}(\mathrm{x}) \in \mathscr{R}^{n_{h}}$, the sigmoid derivative of $j$-th neuron in the $i$-th layer is defined as
\begin{align*}
s_{\mathsf{c},j}^{(i)}(\mathrm{x})
\!=\! \sigma^{'}\!(z_{\mathsf{c},j}^{(i)}(\mathrm{x}))\!=\!
\sigma\!(z_{\mathsf{c},j}^{(i)}(\mathrm{x}))
[
1\!-\!\sigma\!(z_{\mathsf{c},j}^{(i)}(\mathrm{x}))
],
j\!=\!1,\ldots,n_h.  
\end{align*}
Then we have for $i=1,\cdots,l-1,$ 
\begin{align}
\sigma'\! (z_{\mathsf{c}}^{(i)}(\mathrm{x}))
\!=\!
[
s_{\mathsf{c},1}^{(i)}(\mathrm{x}),
s_{\mathsf{c},2}^{(i)}(\mathrm{x}),
\ldots,
s_{\mathsf{c},n_h}^{(i)}(\mathrm{x})
]^{T}, \nonumber\\
S_{\mathsf{c}}^{(i)}(\mathrm{x})
\!=\!
\frac{\partial h_{\mathsf{c}}^{(i)}(\mathrm{x})}
{\partial z_{\mathsf{c}}^{(i)}(\mathrm{x})}\!=\!\!
\begin{bmatrix} \label{3.5}
s_{\mathsf{c},1}^{(i)}(\mathrm{x})\!\!\!\! & \!\cdots\! & \!0\\
0 & \cdots & 0\\
\vdots & \ddots & \vdots\\
0 & \cdots & \!\!\!\!s_{\mathsf{c},n_h}^{(i)}(\mathrm{x})
\end{bmatrix}\! \!\in\! \mathscr{R}^{n_{h}} \!\!\times\! \mathscr{R}^{n_{h}}\!.
\end{align}
Substituting \eqref{3.5} in \eqref{3.4} the preceding chain-rule expression can be rewritten in compact form
\[
\frac{\partial \mathsf{c}_{\upphi}(\mathrm{x})}
{\partial \mathrm{x}}
=
\mathscr{V}^{(l)}
S_{\mathsf{c}}^{(l-1)}(\mathrm{x})
\mathscr{V}^{(l-1)}
\cdots
\mathscr{V}^{(2)}
S_{\mathsf{c}}^{(1)}(\mathrm{x})
\mathscr{V}^{(1)}.
\]
Therefore, the derivative of the output network is given by
\begin{align*}
(\nabla \mathsf{c}_{\upphi}(\mathrm{x}))^{T}=   
(\mathscr{V}^{(l)}
S_{\mathsf{c}}^{(l-1)}(\mathrm{x})
\mathscr{V}^{(l-1)}
\cdots
\mathscr{V}^{(2)}
S_{\mathsf{c}}^{(1)}(\mathrm{x})
\mathscr{V}^{(1)})\hat{\mathrm{e}},
\end{align*}
where $\hat{\mathrm{e}} \!\in\! \mathscr{R}^{1 \times n}$ denotes the canonical unit vector and $\hat{\mathrm{e}}$ embeds the output scalar derivative to the corresponding component of the $n$ dimensional state gradient. We compute the iterated Lie derivatives 
\begin{align*}
    \mathcal{L}^{1}_{\mathsf{f}_{\upphi}}\mathsf{c}_{\upphi}(\mathrm{x})&= (\mathscr{V}^{(l)}
S_{\mathsf{c}}^{(l-1)}(\mathrm{x})
\cdots
S_{\mathsf{c}}^{(1)}(\mathrm{x})
\mathscr{V}^{(1)})\nonumber \\
& \quad \times \hat{\mathrm{e}} (\mathscr{W}_{\mathsf{f}}^{(l)}h_{\mathsf{f}}^{(l-1)}(\mathrm{x})+b_{\mathsf{f}}^{(l)}) \nonumber \\
&=\theta_{0}^{'}(\mathrm{x})\mathsf{f}_{\upphi}(\mathrm{x})=\theta_{1}(\mathrm{x}),  \nonumber\\
     &\vdots \nonumber\\
       \mathcal{L}^{n-1}_{\mathsf{f}_{\upphi}}\mathsf{c}_{\upphi}(\mathrm{x}) &=\theta_{n-1}(\mathrm{x}).
\end{align*}
Next we proceed to examine the conditions $\mathcal{L}_{\mathsf{g}_{\upphi}} \mathcal{L}_{\mathsf{f}_{\upphi}}^{p}\,\mathsf{c}_{\upphi}(x)=0,\, p=0,\ldots,n-2,$ required for the system \eqref{6} to possess relative degree $r=n.$ Therefore we  enforce the following equality constraints represented by the neural Lie derivatives
\begin{align*}
    \mathcal{L}_{\mathsf{g}_{\upphi}} \mathcal{L}^{0}_{\mathsf{f}_{\upphi}}\mathsf{c}_{\upphi}(\mathrm{x})&=\theta^{'}_{0}(\mathrm{x})\, \mathsf{g}_{\upphi}(\mathrm{x})=0, \nonumber \\
      &\vdots \nonumber \\
      \mathcal{L}_{\mathsf{g}_{\upphi}} \mathcal{L}^{n-2}_{\mathsf{f}_{\upphi}}\mathsf{c}_{\upphi}(\mathrm{x})&=\theta_{n-2}^{'}(\mathrm{x})\, \mathsf{g}_{\upphi}(\mathrm{x})=0,
\end{align*}
which can be expressed in the following generalized form:
\begin{align*}
    \theta_{p}^{'}(\mathrm{x}) \, \mathsf{g}_{\upphi}(\mathrm{x})=0, \forall \, p\in \{0,1,2,\cdots,n-2\}.  
\end{align*}
Hence, we obtain optimization problem~\eqref{3.2}.
    Solving the constrained optimization problem and replacing the components of the traditional controller (\ref{dif1.1}) with the neural Lie derivatives we obtain the following feedback linearized controller
    \begin{align*} 
    \mathsf{u}(\mathrm{x})= - \dfrac{\theta_{n}(\mathrm{x})+k_{n-1}\theta_{n-1}(\mathrm{x})+\cdots k_{0}\theta_{0}(\mathrm{x})-V\mathrm{w}}{\theta_{n-1}^{'}(\mathrm{x})\mathsf{g}_{\upphi}(\mathrm{x})},
\end{align*}
where the identity $\theta_{n-1}^{'}(\mathrm{x})\mathsf{g}_{\upphi}(\mathrm{x})\neq 0,$ holds for $\mathrm{x}\in \mathscr{R}^{n}.$      
        \end{proof}
\subsection{Constraint Handling via the Augmented-Lagrangian Method}
The proposed optimization \eqref{3.2} problem is multidimensional and highly nonconvex owing to the large number of neural-network parameters and the imposed equality constraints. Standard gradient-based training updates the network parameters primarily to reduce the prediction loss and does not preserve feasibility with respect to the constraints. Contrarily, direct solution of the constrained optimization problem are computationally demanding due to the number of parameters. To address this, we propose the augmented-Lagrangian method which embeds the constraint residuals into the training objective through Lagrange-multiplier and quadratic-penalty terms. The gradients of the resulting augmented objective can be readily evaluated using standard auto-differentiation functions and subsequently used for gradient-based optimization. This enables the network parameters to be updated toward minimization prediction-loss while satisfying the imposed constraints.
Let the constrained problem \eqref{3.2} be written as
\begin{align} \label{t1}
\min_{\upphi}\  \mathscr{L}(\upphi) \ \text{s.t.} \ \psi_{p,k}(\upphi)\!=\!0, \ p\!=\!0,\!\ldots\!,n\!-\!2, \  k\!=\!1,\!\ldots\!,m,
\end{align}
where $\psi_{p,k}(\upphi)=\theta_p'(\mathrm{x}_k)\,\mathsf{g}_{\upphi}(\mathrm{x}_k)$ are equality constraints indexed by both sample $k$ and structural index $p$ whose collection determines the feasible set. We solve \eqref{t1} via the augmented-Lagrangian method
\begin{equation}
\mathscr{L}_{\rho}(\upphi,\nu)
\!=\!
\mathscr{L}(\upphi)
\!+\!
\sum_{k=1}^{m}\sum_{p=0}^{n-2}\nu_{p,k}\,\psi_{p,k}(\upphi)
\!+\!
\frac{\rho}{2}\sum_{k=1}^{m}\sum_{p=0}^{n-2}\big(\psi_{p,k}(\upphi)\big)^2
\label{eq:augmented_lagrangian}
\end{equation}
where $\nu_{p,k}\in\mathscr{R}$ denotes the Lagrange multiplier associated with the equality constraint $\psi_{p,k}(\upphi)=0$, and $\rho>0$ is the penalty parameter. The second term on the right hand-side of \eqref{eq:augmented_lagrangian} corresponds to the classical Lagrangian contribution, whereas the quadratic term penalizes violations of the equality constraints.  The original constrained problem is thereby replaced by the unconstrained optimization problem
\begin{align} \label{lm}
 \min_{\upphi}\mathscr{L}_{\rho}(\upphi,\nu).   
\end{align}
The solution of \eqref{lm} is obtained by an iterative scheme, see Algorithm \ref{alg:AL_training_loops}. Let $i$ denote the outer iteration index and for fixed multipliers $\nu^{i}$ and penalty parameter $\rho_i$, the neural network parameters are updated by solving the subproblem
\begin{equation}
\upphi^{i+1}
\approx
\underset{\upphi}{\arg\min}\;
\mathscr{L}_{\rho_i}(\upphi,\nu^{i}),
\end{equation}
using a gradient-based optimizer. The multipliers and penalty parameters are subsequently updated according to
\begin{align*}
 \nu_{p,k}^{\,i+1}
\!&=\!
\nu_{p,k}^{\,i}
\!+\!
\rho_i\,\psi_{p,k}(\upphi^{i+1}),
\,
p\!=\!0,\!\dots\!,n\!-\!2,\,
k\!=\!1,\!\dots\!,m,\\  
\rho_{i+1}
&\!=\!
\min\left\{\beta\rho_i,\rho_{\max}\right\},
\, \beta\!>\!1,
\end{align*}
where $\rho_{\max}$ limits excessive growth of the penalty parameter. Thus, each outer iteration consists of minimizing the augmented objective with respect to the MLP parameters, followed by updates of the multipliers and the penalty parameter. Repeating this procedure progressively reduces the identification error while enforcing the constraints.

\begin{algorithm}[!t]
\caption{Augmented-Lagrangian Training.}
\label{alg:AL_training_loops}
\begin{algorithmic}

\STATE \textbf{Input:}
$\mathrm{x}_0,T,\Delta t,\mathsf{u}(t),\rho_0,
\beta,\rho_{\max},N_{\mathrm{out}},N_{\mathrm{in}}$

\STATE \textbf{Output:} Learned parameters $\upphi$

\STATE Simulate
$\dot{\mathrm{x}}=f(\mathrm{x})+g(\mathrm{x})\mathsf{u}(t)$,
$\mathsf{y}=c(\mathrm{x})$

\STATE Construct
$D=\{(\mathrm{x}_k,\mathsf{u}_k,
\dot{\mathrm{x}}_k,\mathsf{y}_k)\}_{k=1}^{m}$

\STATE Initialize $\upphi$, $\nu_{p,k}\gets0$, and $\rho\gets\rho_0$

\FOR{$i=1,\ldots,N_{\mathrm{out}}$}

    \FOR{$j=1,\ldots,N_{\mathrm{in}}$}

        \STATE Update $\upphi$ by minimizing
        \[
        \!\!\!\!\!\!\!\!\!\!\!\!\!\mathscr{L}_{\rho}(\upphi,\nu)
        \!=\!
        \mathscr{L}(\upphi)
        \!+\!
        \sum_{k=1}^{m}\!\sum_{p=0}^{n-2}
        \!\nu_{p,k}\psi_{p,k}(\upphi)
        \!+\!
        \frac{\rho}{2}
        \sum_{k=1}^{m}\!\sum_{p=0}^{n-2}
        \!(\psi_{p,k}(\upphi))^{2}
        \]

    \ENDFOR

    \STATE Evaluate
    $\psi_{p,k}(\upphi)
    =
    \theta_{p}'(\mathrm{x}_k)
    \mathsf{g}_{\upphi}(\mathrm{x}_k)$

    \STATE Update
    $\nu_{p,k}
    \gets
    \nu_{p,k}+\rho\psi_{p,k}(\upphi)$

    \STATE Update
    $\rho\gets\min\{\beta\rho,\rho_{\max}\}$

\ENDFOR

\STATE Verify
$\theta_{n-1}'(\mathrm{x}_k)
\mathsf{g}_{\upphi}(\mathrm{x}_k)\neq0$,
$k=1,\ldots,m$

\STATE \textbf{return} $\upphi$

\end{algorithmic}
\end{algorithm}

\section{Practical Closed-loop Stability Analysis}     
This section establishes a practical closed-loop stability result for the proposed data-driven feedback-linearizing controller in the presence of neural approximation errors. Using bounded identification errors together with the universal function approximation property (UFAP) of neural networks \cite{h4}, we first derive finite bounds on the errors of the learned feedback-linearization quantities. These bounds are then propagated into the closed-loop dynamics to characterize the resulting perturbations and establish practical stability.
\begin{theorem}
\label{thm:bounded-neural-tracking}
Consider a data-driven nonlinear system of the form \eqref{6}, in a compact operation domain $\mathcal{D} \subseteq \mathscr{R}^{n}.$ Assume the system has relative degree $r=n,$ and the corresponding controller is given in \eqref{c1}. If the MLPs are sufficiently expressive, i.e. fulfill the UFAP, and the controller gains are selected such that the resulting characteristic polynomial is
Hurwitz, then the closed-loop tracking error is uniformly ultimately bounded. Consequently, the learning-based feedback-linearized closed-loop system is practically stable.
\end{theorem}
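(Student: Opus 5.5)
The idea is to apply the controller \eqref{c1}, built from the learned quantities $\theta_p$, to the true plant \eqref{1}. The closed loop should then be a Hurwitz linear system in the true normal coordinates, driven by a bounded perturbation made up of identification errors. A quadratic Lyapunov argument then gives uniform ultimate boundedness.

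\emph{Step 1: lifting identification errors to Lie derivatives.} Define the errors $\tilde{\mathsf f}=\mathsf f-\mathsf f_{\upphi}$, $\tilde{\mathsf g}=\mathsf g-\mathsf g_{\upphi}$ and $\tilde{\mathsf c}=\mathsf c-\mathsf c_{\upphi}$ on $\mathcal D$. Because $\sigma\in\mathpzc C^\infty$, the UFAP can be used in its $\mathpzc{C}^{n}$ (Sobolev-type) form, so that these errors are small together with their first $n$ derivatives on the compact set $\mathcal D$. Expanding $\theta_p=\mathcal L^p_{\mathsf f_{\upphi}}\mathsf c_{\upphi}$ via the recursion of Theorem~\ref{main} and the chain-rule gradient formula from its proof, I will show by induction on $p$ that
\[
\bigl|\mathcal L^p_{\mathsf f}\mathsf c(\mathrm x)-\theta_p(\mathrm x)\bigr|\le\epsilon_p,\qquad
\bigl|\mathcal L_{\mathsf g}\mathcal L^{p}_{\mathsf f}\mathsf c(\mathrm x)-\theta_p'(\mathrm x)\mathsf g_{\upphi}(\mathrm x)\bigr|\le\bar\epsilon_p ,
\]
for all $\mathrm x\in\mathcal D$. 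Each $\epsilon_p$ and $\bar\epsilon_p$ is a polynomial in the approximation accuracies and in the $\mathpzc C^{n}$ norms on $\mathcal D$. In addition, $\theta'_{n-1}\mathsf g_{\upphi}$ must be kept bounded away from zero. For this I will assume $|\mathcal L_{\mathsf g}\mathcal L_{\mathsf f}^{n-1}\mathsf c|\ge \underline{b}>0$ on $\mathcal D$ and take $\bar\epsilon_{n-1}<\underline b/2$. The relative-degree constraints of \eqref{3.2} hold only at the samples, and I will treat their off-sample violation in the same way: by continuity and data coverage, $|\theta_p'\mathsf g_{\upphi}|\le\bar\epsilon_p$ on $\mathcal D$.

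\emph{Step 2: closed-loop error dynamics.} In the true coordinates $\xi_i=\mathcal L^{i-1}_{\mathsf f}\mathsf c$ with $\mathsf u$ from \eqref{c1}, the true plant gives $\mathsf y^{(n)}=\mathcal L^n_{\mathsf f}\mathsf c+\mathcal L_{\mathsf g}\mathcal L^{n-1}_{\mathsf f}\mathsf c\,\mathsf u$. I will rewrite this as the nominal term $-\sum k_i\xi_{i+1}+V\mathrm w$ plus a remainder $\Delta(\mathrm x,\mathrm w)$. The remainder collects the mismatches from Step 1, multiplied by $\mathsf u$ (which is bounded on $\mathcal D$ for bounded $\mathrm w$) and by the ratio $\mathcal L_{\mathsf g}\mathcal L^{n-1}_{\mathsf f}\mathsf c/(\theta'_{n-1}\mathsf g_{\upphi})$. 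Subtracting the reference model, the tracking error $e$ then satisfies $\dot e=Ae+B\Delta$, with $A$ the companion matrix of the Hurwitz polynomial and $|\Delta|\le\bar\Delta$ on $\mathcal D$. Here $\bar\Delta\to0$ as the identification errors vanish.

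\emph{Step 3: Lyapunov bound.} I will solve $A^TP+PA=-Q$ and take $W=e^TPe$. This gives $\dot W\le-\lambda_{\min}(Q)|e|^2+2\|PB\|\bar\Delta|e|$, so $\dot W<0$ outside the ball of radius $\mu=2\|PB\|\bar\Delta/\lambda_{\min}(Q)$. Hence $e$ is uniformly ultimately bounded, with ultimate bound $\sqrt{\lambda_{\max}(P)/\lambda_{\min}(P)}\,\mu$; this is practical stability. Proportional--integral action, if included, just augments $A$ and leaves the argument unchanged.

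\emph{Main obstacle.} The hardest part is justifying that trajectories stay in $\mathcal D$, since all the bounds of Step 1 are only valid there. My plan is to choose the initial conditions and the reference in a sublevel set of $W$ whose image under the diffeomorphism $\mathrm x\mapsto\xi$ lies inside $\mathcal D$, and to require $\mu$ small enough that this sublevel set is forward invariant. This amounts to a smallness condition on the approximation accuracy, which the UFAP makes achievable. A secondary difficulty is the pointwise-to-uniform gap for the sampled constraints, which I will absorb into $\bar\epsilon_p$ through a Lipschitz and fill-distance argument.
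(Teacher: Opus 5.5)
Your argument is correct and has the same skeleton as the paper's proof. You substitute the learned controller into the true normal form $\mathsf{y}^{(n)}=\alpha(\mathrm{x})+\beta(\mathrm{x})\mathsf{u}$, and your remainder $\Delta$ is exactly the paper's $d(\mathpzc{t},\mathrm{x})$ in \eqref{s4}. The difference is in how the two ends of the argument are closed.

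\textbf{What the paper does.} It obtains $\varepsilon_\alpha,\varepsilon_\theta,\varepsilon_\beta$ only from continuity on the compact set $\mathcal{D}$. That makes $\bar d$ finite but not small: the $\mathpzc{C}^0$ bounds $\varepsilon_f,\varepsilon_g,\varepsilon_c$ cannot control $\theta_n$ or $\theta_{n-1}'\mathsf{g}_{\upphi}$, which involve derivatives of $\mathsf{c}_{\upphi}$ up to order $n$. It takes $\inf_{\mathcal{D}}|\beta_{\upphi}|\ge\delta$ directly from ``$r=n$'', although relative degree is a property of the true plant. Finally, it passes from $|d|\le\bar d$ straight to bounded tracking error, with no explicit ISS or Lyapunov step and no check that trajectories remain in $\mathcal{D}$, which is where all the bounds live.

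\textbf{What your version adds.} Your derivative-level UFAP makes $\bar\Delta\to0$ as the identification error vanishes, which is what gives ``practical'' stability real content. Your choice $\bar\epsilon_{n-1}<\underline b/2$ derives the lower bound on $|\beta_{\upphi}|$ rather than assuming it. The quadratic Lyapunov function gives an explicit ultimate bound. The sublevel-set argument settles invariance of $\mathcal{D}$; the relevant set is the tube $\xi_r(\mathpzc{t})+\{W\le c\}$, pulled back through the inverse of $\mathrm{x}\mapsto\xi$.

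\textbf{The price.} You need hypotheses the paper leaves implicit:
the \emph{trained} networks, not merely some networks, must be $\mathpzc{C}^n$-accurate;
the reference and the initial error must lie inside $\mathcal{D}$ with margin;
and the approximation error must fall below a threshold.
So you prove a regional result with checkable conditions. The paper's statement looks unconditional only because it tacitly assumes the state stays in $\mathcal{D}$.

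One simplification. In the true coordinates $\xi_i=\mathcal{L}^{i-1}_{\mathsf{f}}\mathsf{c}$ the integrator chain is exact, so the sampled constraints $\theta_p'\mathsf{g}_{\upphi}=0$, $p\le n-2$, never enter $\Delta$. Moreover, $|\theta_p'\mathsf{g}_{\upphi}|\le\bar\epsilon_p$ on $\mathcal{D}$ already follows from your Step 1, because $\mathcal{L}_{\mathsf{g}}\mathcal{L}^{p}_{\mathsf{f}}\mathsf{c}\equiv0$ for the true plant. The Lipschitz and fill-distance argument for off-sample constraint violation is therefore unnecessary.
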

\begin{proof}
Consider the nonlinear SISO system \eqref{1} with relative degree $r\!=\!n$. Differentiating the output $n$ times yields
\begin{equation} \label{s0}
\mathsf{y}^{(n)}(\mathpzc{t})
=
\alpha(\mathrm{x})
+
\beta(\mathrm{x})\,\mathsf{u}(\mathpzc{t}),
\end{equation}
with $\alpha(\mathrm{x})=\mathcal{L}_{\mathsf{f}}^{\,n}\mathsf{c}(\mathrm{x})$, $\beta(\mathrm{x})=\mathcal{L}_{\mathsf{g}}
\mathcal{L}_{\mathsf{f}}^{\,n-1}\mathsf{c}(\mathrm{x})$.
If the unknown nonlinear mapping is approximated by neural network \eqref{6}, then 
\[
\begin{aligned}
\alpha_{\upphi}(\mathrm{x})
&=
\mathcal{L}_{\mathsf{f}_{\upphi}}^{\,n}
\mathsf{c}_{\upphi}(\mathrm{x})
=
\theta_n(\mathrm{x}),
\\
\beta_{\upphi}(\mathrm{x})
&=
\mathcal{L}_{\mathsf{g}_{\upphi}}
\mathcal{L}_{\mathsf{f}_{\upphi}}^{\,n-1}
\mathsf{c}_{\upphi}(\mathrm{x})
=
\theta_{n-1}'(\mathrm{x})
\mathsf{g}_{\upphi}(\mathrm{x}),
\end{aligned}
\]
where $\theta_0(\mathrm{x}),\cdots,\theta_{n-1}(\mathrm{x}),\theta_n(\mathrm{x})$ are defined as in Theorem \ref{main}. The data-driven neural feedback-linearizing controller \eqref{c1} can be written as
\begin{equation}\label{s1}
\mathsf{u}(\mathpzc{t})
=
\frac{
V\mathrm{w}
-
\alpha_{\upphi}(\mathrm{x})
-
k_{n-1}\theta_{n-1}(\mathrm{x})
-\cdots
-
k_0\theta_0(\mathrm{x})
}
{\beta_{\upphi}(\mathrm{x})},
\end{equation}
where the coefficients
$k_0,\ldots,k_{n-1}$ are selected such that the resulting equation
$
s^n
+
k_{n-1}s^{n-1}
+\cdots
+
k_1s
+
k_0
$
is Hurwitz.
Substituting \eqref{s1} in \eqref{s0}, we obtain
\begin{align} \label{s2}
\mathsf{y}^{(n)}
&=\alpha(\mathrm{x})+\frac{\beta(\mathrm{x})}{\beta_{\upphi}(\mathrm{x})}\big[V \mathrm{w}-\alpha_{\upphi}(\mathrm{x})-\sum_{p=0}^{n-1}k_p\theta_p(\mathrm{x})\big].
\end{align}
Adding
$\sum_{p=0}^{n-1}k_p\mathsf{y}^{(p)}$
to both sides of \eqref{s2}, we obtain
\begin{align} \label{s3}
\mathsf{y}^{(n)}+\sum_{p=0}^{n-1}k_p\mathsf{y}^{(p)}&=
\alpha(\mathrm{x})+\frac{\beta(\mathrm{x})}{\beta_{\upphi}(\mathrm{x})}\big[V \mathrm{w}-\alpha_{\upphi}(\mathrm{x})-\sum_{p=0}^{n-1}k_p\theta_p\big]\nonumber\\
& \quad +
\sum_{p=0}^{n-1}
k_p\mathsf{y}^{(p)} = V \mathrm{w}
+
d(\mathpzc{t},\mathrm{x}),
\end{align}
where
\begin{equation}\label{s4}
\begin{aligned}
d(\mathpzc{t},\mathrm{x})
=&
\alpha(\mathrm{x})
-
\alpha_{\upphi}(\mathrm{x})
+
\sum_{p=0}^{n-1}
k_p
\big[
\mathsf{y}^{(p)}
-
\theta_p(\mathrm{x})
\big]
\\
&+
\frac{
\beta(\mathrm{x})
\!-\!
\beta_{\upphi}(\mathrm{x})
}{
\beta_{\upphi}(\mathrm{x})
}
\big[
V \mathrm{w}
\!-\!
\alpha_{\upphi}(\mathrm{x})
\!-\!
\sum_{p=0}^{n-1}
k_p\theta_p(\mathrm{x})
\big].
\end{aligned}
\end{equation}
Therefore,
\begin{equation}\label{s5}
\mathsf{y}^{(n)}
+
k_{n-1}\mathsf{y}^{(n-1)}
+\cdots
+
k_1\dot{\mathsf{y}}
+
k_0\mathsf{y}
=
V \mathrm{w}
+
d(\mathpzc{t},\mathrm{x}).
\end{equation}
The term $d(\mathrm{x},\mathpzc{t})$ denotes the residual perturbation induced by the mismatch between the true Lie derivatives and the identified Lie derivatives after applying the data-driven feedback-linearizing control law. As the true system mappings and their neural approximations are continuous on the compact operating domain $\mathcal{D}$, it follows from the UFAP of sufficiently expressive MLP, that the corresponding modeling errors are bounded on $\mathcal{D}$. Thus, there exist finite positive constants $\varepsilon_f,\varepsilon_g,\varepsilon_c$ such that
\[
\|\mathsf{f}-\mathsf{f}_{\upphi}\|\leq\varepsilon_f,\quad
\|\mathsf{g}-\mathsf{g}_{\upphi}\|\leq\varepsilon_g,\quad
\|\mathsf{c}-\mathsf{c}_{\upphi}\|\leq\varepsilon_c,
\quad \forall\,\mathrm{x}\in\mathcal{D}.
\]
Since neural networks employ smooth sigmoid activation functions and the
learned Lie derivatives are also continuously
differentiable on $\mathcal{D}$, every term on the right-hand
side of \eqref{s4} is continuously bounded on the
compact operating region and there exists positive constants $\varepsilon_{\alpha},\varepsilon_{\theta}, \varepsilon_{\beta}$ such that
\[
\begin{aligned}
\left|
\alpha(\mathrm{x})
-
\alpha_{\upphi}(\mathrm{x})
\right|
&\le
\varepsilon_{\alpha},
\\
\max_{0\leq p\leq n-1}
\left| 
\mathsf{y}^{(p)}(\mathpzc{t})
-
\theta_p(\mathrm{x})
\right|
&\le
\varepsilon_{\theta}
\\
\left|
\beta(\mathrm{x})
-
\beta_{\upphi}(\mathrm{x})
\right|
&\le
\varepsilon_{\beta}.
\end{aligned}
\]

By assumption $r=n$, there exists a constant $\delta>0$ such that $\inf\limits_{\mathrm{x}\in\mathcal{D}}
\left|\beta_{\upphi}(\mathrm{x})\right|
\geq \delta>0.$
Define $N_{\upphi}(\mathpzc{t},\mathrm{x})
=
\mathscr{V}\,\mathrm{w}
-
\alpha_{\upphi}(\mathrm{x})
-
\sum_{p=0}^{n-1}
k_p\theta_p(\mathrm{x}),$ on a compact domain
$\left|
N_{\upphi}(\mathpzc{t},\mathrm{x})
\right|
\le
\varepsilon_{N}.$
Applying the triangle inequality and using the above bounds, we obtain
\[
\left|
d(\mathpzc{t},\mathrm{x})
\right|
\le
\varepsilon_{\alpha}
+
\sum_{p=0}^{n-1}
|k_p|
\varepsilon_{\theta}
+
\frac{\varepsilon_{\beta}}{\delta}
\varepsilon_{N}=\bar d.
\]
Hence, we get, $\left|d(\mathpzc{t},\mathrm{x})\right|\le\bar d$, i.e. bounded neural modeling errors imply bounded tracking errors. Consequently, the learning based feedback-linearized closed-loop
system is practically stable. If the
neural representation is exact, then
$\bar d=0$.
\end{proof}

\section{Armature controlled DC motor}
  In this study, a nonlinear armature-controlled DC motor is considered to validate the derived results, as illustrated in Fig. \ref{fig:dc}.  Its main components are the armature, stator, commutator, and brushes, which collectively convert the applied DC electrical energy into rotational mechanical motion. 

\begin{figure}[b]
    \centering
    \includegraphics[width=.40 \textwidth]{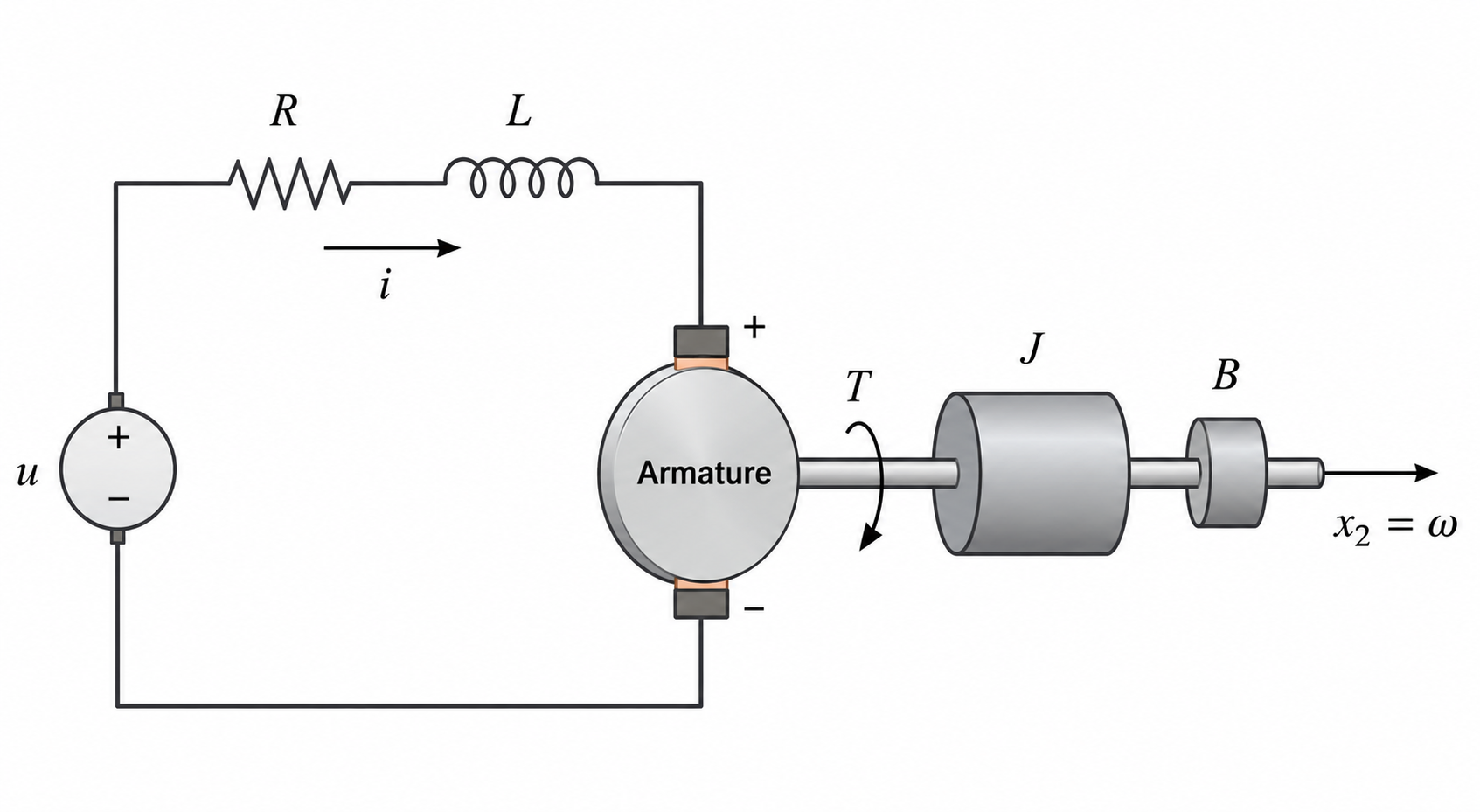}
    \caption{Schematic representation of an armature-controlled DC motor.}
    \label{fig:dc}
\end{figure}
The armature voltage is treated as the control input, the two state variables are armature current $\mathrm{x}_{1}$ and the angular speed of the motor shaft $\mathrm{x}_{2}$. The focus is on speed regulation because the motor speed can be directly controlled by varying the armature voltage, which changes the armature current and the generated torque. Considering nonlinear friction, the DC motor dynamics are expressed as
\begin{align}
\dot{\mathrm{x}}_{1}
&= -\frac{R}{L}\mathrm{x}_{1}
   -\frac{K_b}{L}\mathrm{x}_{2}
   +\frac{1}{L}\mathsf{u}, \nonumber
\\[0.3em]
\dot{\mathrm{x}}_{2}
&= \frac{K_T}{J}\mathrm{x}_{1}
   -\frac{B}{J}\mathrm{x}_{2}
   -\frac{\alpha}{J}\mathrm{x}_{2}^{3},\nonumber
\\[0.3em]
\mathsf{y}
&= \mathrm{x}_{2},
\label{dc1}
\end{align}
\noindent
where \(J\), \(B\), \(R\), and \(L\) denote the rotor moment of inertia, damping coefficient, armature resistance, and armature inductance. 
Further, \(K_b\), \(K_T\), and \(\mathsf{u}\) represent the back-emf constant, motor torque constant and applied armature voltage, respectively. The parameters are given as \(R=2\), \(L=0.5\), \(K_b=0.1\), \(K_T=0.1\), \(J=0.02\), \(B=0.1\), and \(\alpha=0.01\).
  
\subsubsection{Data Generation }
The training dataset is generated by simulating continuous system trajectories initiated from randomized initial conditions. The system is persistently excited by pseudo-random binary sequence  control input $\mathsf{u}_k  \in \{-1,+1\}$ while the state trajectories evolve within the operating region $[-1,2] \times [-1,2]$. Each trajectory is simulated over a time horizon of $T=5.0$ s with a sampling interval of $\Delta t=0.01$ s, where the state derivatives are evaluated from the nonlinear plant model and the states are propagated using the explicit forward-Euler integration scheme. Consequently, each training sample consists of the tuple
$
\left\{
\mathrm{x}_{k},
\mathsf{u}_{k},
\dot{\mathrm{x}}_{k},
\mathsf{y}_{k}
\right\},
$
resulting in an aggregate training dataset of $m=2,000$ samples. 

\par To emulate realistic process and measurement uncertainties, zero-mean Gaussian noise with a standard deviation of $\sigma=0.2$ was introduced during the data generation stage. Specifically, the noise was superimposed on both the numerically propagated state trajectories and the corresponding state derivative observations, resulting in substantially corrupted training data. The augmented-Lagrangian training was performed for \(N_{\mathrm{out}}=25\) outer iterations, with \(N_{\mathrm{in}}=40\) inner epochs per outer iteration, using \(\rho_{0}=1.0\), \(\beta=1.5\), \(\rho_{\max}=50\), zero-initialized Lagrange multipliers, and an output-loss weighting factor of \(\lambda=0.5\). The neural model was trained using Adam optimizer with a learning rate $3 \times 10^{-3}$. 

\subsubsection{Performance Evaluation of neural approximation}
\par Figs. \ref{fig:state_positions} and \ref{fig:state_derivatives_noisy} compare the noisy state trajectories and their corresponding state derivatives with the responses predicted by the learned neural model for four representative initial conditions. The close agreement between the noisy measurements and the predicted trajectories demonstrates that the proposed framework accurately captures both state evolution and underlying system dynamics despite the presence of significant measurement noise. Further, Fig. \ref{fig:loss_matrix_dashboard}, shows the progressive reduction of the augmented Lagrangian training loss, with the constraint violation remaining consistently small throughout training.
\begin{figure}[!t]
    \centering
    \includegraphics[width=0.40 \textwidth]{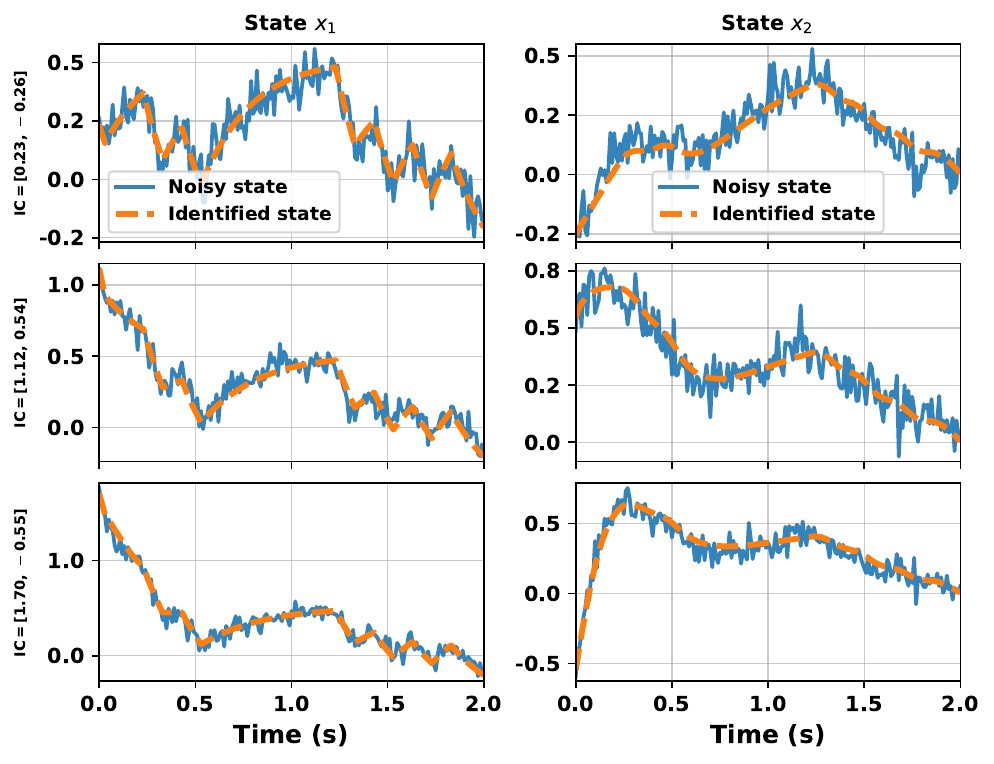}
    \caption{Validation simulation evaluating true continuous state trajectories against the trained model predictions across four distinct initial state configurations. The near overlap between the true and predicted curves also suggests good generalization and a high-quality fit in both transient and steady-state regions.}
    \label{fig:state_positions}
\end{figure}

\begin{figure}[!t]
    \centering
    \includegraphics[width=.45 \textwidth]{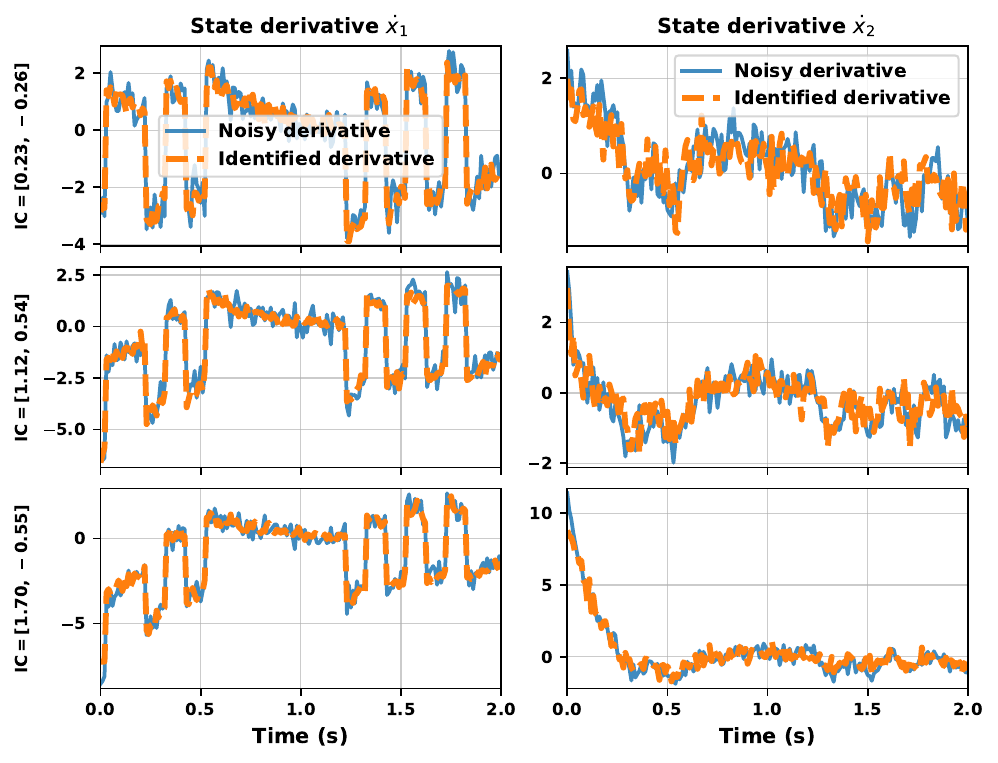}
    \caption{Comparison profiles between the true analytical state derivatives and the neural network vector field predictions ($\dot{\mathrm{x}}_1, \dot{\mathrm{x}}_2$). The learned state-derivative profiles closely follow the true derivatives across all tested initial conditions, showing that the neural model accurately captures the local dynamics of the system. The small deviations visible only in a few early transient portions indicate minor approximation errors, while the overall overlap confirms strong derivative-level consistency.}
    \label{fig:state_derivatives_noisy}
\end{figure}
\begin{figure}[!t]
    \centering
    \includegraphics[width=.40 \textwidth]{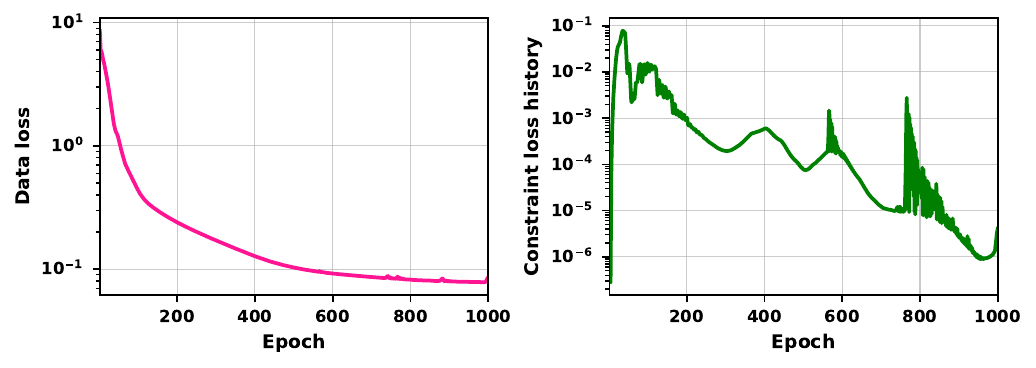}
    \caption{Training Convergence of data loss and constraint residual:  Random initialization initially produces a small constraint residual. As the network parameters are adjusted to improve data fitting, the residual temporarily increase, after which the augmented-Lagrangian terms progressively enforce the constraint and drive the residual toward zero.}
    \label{fig:loss_matrix_dashboard}
\end{figure}

\subsubsection{Data-driven Feedback control}
To evaluate the closed-loop performance of the learned neural model representing the DC motor, the desired closed loop poles are selected at $-3, -5$, resulting in the gain $k_0=15,$ and $ k_1=8$. The resulting data-driven feedback control is given as
\begin{equation}
\mathsf{u}(\mathrm{x})
\!=\!
\frac{\!-\theta_{2}(\mathrm{x})
\!+\!\ddot{r}(\mathpzc{t})\!-\!8\left[\theta_{1}(\mathrm{x})\!-\!\dot{r}(\mathpzc{t})\right]
\!-\!15\left[\theta_{0}(\mathrm{x})\!-\!r(\mathpzc{t})\right]
}
{\theta_{1}^{'}(\mathrm{x})\mathsf{g}_{\upphi}(\mathrm{x})}\nonumber.
\end{equation}
The state trajectories are randomly initialized away from the equilibrium to evaluate the transient regulation performance, while the corresponding control input is monitored to verify that the imposed control bounds are respected. The closed-loop tracking performance is evaluated  with the reference trajectory
$r(\mathpzc{t})=1-e^{-\mathpzc{t}}$, whose derivatives are
$\dot{r}(\mathpzc{t})=e^{-\mathpzc{t}}$ and
$\ddot{r}(\mathpzc{t})=-e^{-\mathpzc{t}}$.
The exponential reference provides a smooth transition to the desired equilibrium with continuous first and second derivatives, making it suitable for the feedback-linearizing controller while avoiding the excessive control effort associated with an ideal step input. To compensate for accumulated tracking errors and suppress steady-state offsets, an integral error compensation is incorporated into the feedback-linearizing controller.
The simulation results presented in the Figs. \ref{fig:noisy1} and \ref {fig:noisy2} demonstrate the closed-loop performance of the proposed data-driven neural feedback-linearizing controller under noisy operating conditions. As shown in Fig.~\ref{fig:noisy1} the state trajectories converge toward the desired equilibrium while the corresponding control input remains bounded. Fig.~\ref {fig:noisy2} illustrates that the output follows the prescribed reference trajectory with a bounded tracking error and a bounded control effort.

\begin{figure}[t!]
    \centering
    \includegraphics[width=0.45\textwidth]{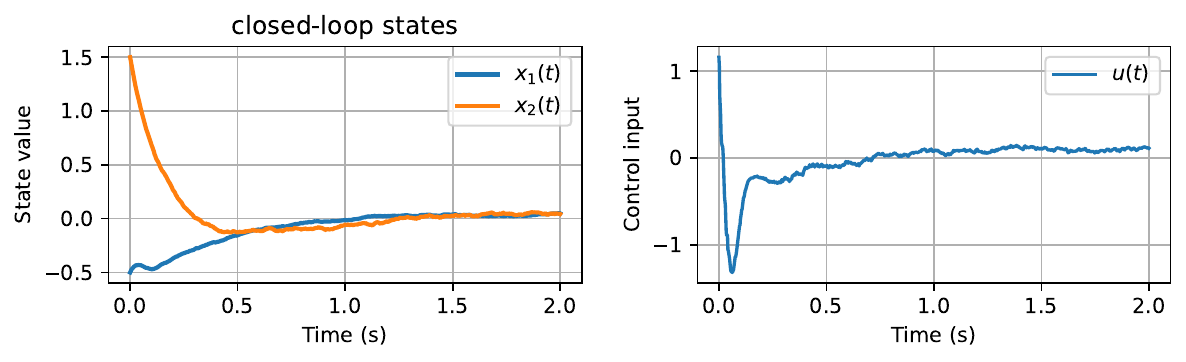}
    \caption{Closed-loop states under noisy operating conditions. The state trajectories converge toward the desired equilibrium while the corresponding control input remains bounded.}
    \label{fig:noisy1}
\end{figure}

\begin{figure}[t!]
    \centering
    \includegraphics[width=0.40 \textwidth]{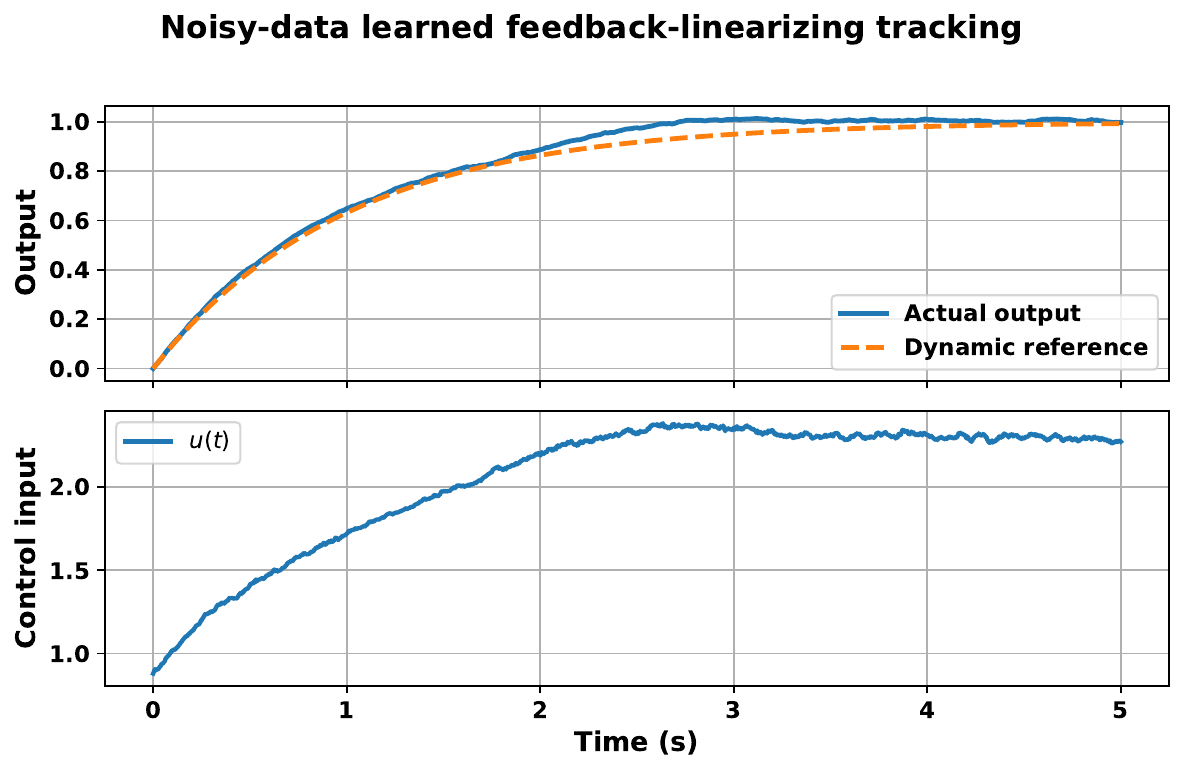}
    \caption{Closed-loop reference tracking performance under noisy operating conditions. The system output follows the reference trajectory with bounded tracking error while maintaining a bounded control input.}
    \label{fig:noisy2}
\end{figure}
\subsubsection{Numerical Verification of Practical Stability}
For the neural model identified from noisy data, Fig.~\ref{fig:error} illustrates the residual modeling error \(d(\mathpzc{t},\bar{\mathrm{x}})\) along the simulated closed-loop trajectories, evaluated according to Theorem~\ref{thm:bounded-neural-tracking}. The maximum residual magnitude is approximately $\bar d \approx 1.04$, such that $\left|d(\mathpzc{t},\bar{\mathrm{x}})\right| \leq \bar d,$ throughout the closed-loop operation. The larger residual bound obtained for the noisy model is attributed to the process and measurement noise introduced during training. The results verify the assumptions of the practical stability theorem and confirm that the learning-based feedback-linearized closed-loop system remains practically stable.
\begin{figure}[htb]
    \centering
    \includegraphics[width=0.40 \textwidth]{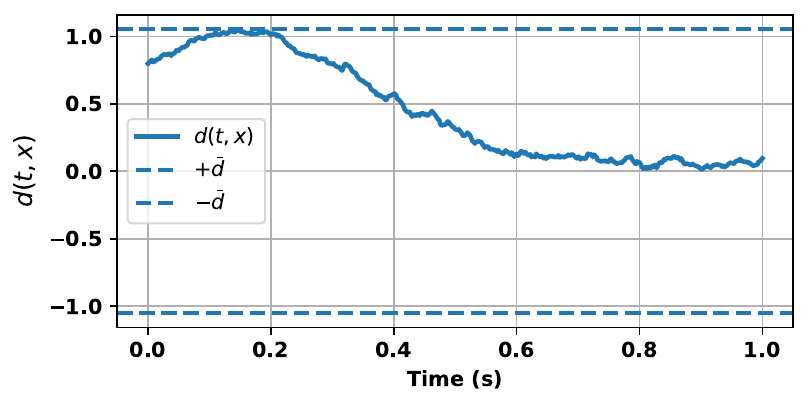}
    \caption{Residual modeling error under noisy operating conditions. The modeling error decreases rapidly during the initial transient and gradually converges to a bounded steady-state value, demonstrating consistent estimation performance despite the presence of noise.}
    \label{fig:error}
\end{figure}

\section{Conclusion}
   This work proposed a data-driven framework for nonlinear system identification and feedback linearization using multilayer perceptron neural networks. The Lie-derivative structure required for feedback linearization was incorporated directly into the learning process through an Augmented-Lagrangian formulation, thereby promoting consistency between the identified model and the structural requirements of the subsequent controller design. The numerical results indicate that the learned model can reproduce the relevant nonlinear dynamics while enabling both stabilization and reference tracking under noisy operating conditions. In addition, the established practical stability result shows that bounded identification errors lead to ultimately bounded tracking errors, providing a direct connection between model accuracy and closed-loop performance. The application to an armature-controlled DC motor further demonstrates the applicability of the proposed approach in the presence of process and measurement uncertainties.  A limitation of the present framework is the assumption that the system possesses a known and uniform relative degree over the considered operating region. Accordingly, the required Lie-derivative constraints are prescribed a priori during training. Future work will therefore focus on jointly identifying the relative-degree structure and enforcing the corresponding feedback-linearization conditions, including the nonvanishing decoupling condition. Further developments will also investigate improved robustness against modeling uncertainty and measurement noise, reduced data requirements, and validation on experimental nonlinear systems.


\section*{References}
\begin{thebibliography}{999}

\bibitem{a1} M. Alsalti, V. G. Lopez, J. Berberich, F. Allgöwer, and M. A. Müller, ``Data-based control of feedback linearizable systems,'' \emph{IEEE Trans. Autom. Control}, vol. 68, no. 11, pp. 7014--7021, 2023.

\bibitem{a2} K. J. {\AA}str\"om and B. Wittenmark, \emph{Adaptive Control}. Reading, MA, USA: Addison-Wesley, 1989.

\bibitem{b1} A. Banaszuk and J. Hauser, ``Approximate feedback linearization: A homotopy operator approach,'' \emph{SIAM J. Control Optim.}, vol. 34, no. 5, pp. 1533--1554, 1996.

\bibitem{b2} S. L. Brunton, J. L. Proctor, and J. N. Kutz, ``Discovering governing equations from data by sparse identification of nonlinear dynamical systems,'' \emph{Proc. Natl. Acad. Sci. USA}, vol. 113, no. 15, pp. 3932--3937, 2016.

\bibitem{d1} A. Dong, A. Starr, and Y. Zhao, ``Neural network-based parametric system identification: A review,'' \emph{Int. J. Syst. Sci.}, vol. 54, no. 13, pp. 2676--2688, 2023.

\bibitem{d2} X. Dai, C. De Persis, N. Monshizadeh, and P. Tesi, ``Data-driven output feedback control of nonlinear systems: Stabilization and robustness,'' \emph{Eur. J. Control}, vol. 87, 2026, Art. no. 101436.


\bibitem{f2} L. Fraile, M. Marchi, and P. Tabuada, ``Data-driven stabilization of SISO feedback linearizable systems,'' arXiv:2003.14240, 2020.


\bibitem{g2} D. Gadginmath, V. Krishnan, and F. Pasqualetti, ``Data-driven feedback linearization using the Koopman generator,'' \emph{IEEE Trans. Autom. Control}, vol. 69, no. 12, pp. 8844--8851, 2024.


\bibitem{h0} H. Hjalmarsson, M. Gevers, S. Gunnarsson, and O. Lequin, ``Iterative feedback tuning: Theory and applications,'' \emph{IEEE Control Syst. Mag.}, vol. 18, no. 4, pp. 26--41, 1998.

\bibitem{h2} J. Hu and K. Hirasawa, ``A method for applying multilayer perceptrons to control of nonlinear systems,'' in \emph{Proc. of the 9th International Conference on Neural Information Processing. IEEE}, vol. 3, 2002, pp. 1267--1271.

\bibitem{h3} A. Hosseinipour and J. Khazaei, ``Sparse identification for data-driven dynamics and impedance modeling of power converters in dc microgrids,'' \emph{IEEE J. Emerg. Sel. Topics Ind. Electron.}, vol. 5, no. 2, pp. 720--732, 2024.

\bibitem{h4} K. Hornik, M. Stinchcombe, and H. White, ``Multilayer feedforward networks are universal approximators,'' \emph{Neural Netw.}, vol. 2, no. 5, pp. 359--366, 1989.

\bibitem{i1} A. Isidori, \emph{Nonlinear Control Systems: An Introduction, 3rd ed.} Berlin, Germany: Springer-Verlag, 1995.


\bibitem{k1} K. S. Narendra and K. Parthasarathy, ``Identification and control of dynamical systems using neural networks,'' \emph{IEEE Trans. Neural Netw.}, vol. 1, no. 1, pp. 4--27, 1990.

\bibitem{k0} M. Karimshoushtari and C. Novara, ``Design of experiments for nonlinear system identification: A set membership approach,'' \emph{Automatica}, vol. 119, 2020, Art. no. 109036.


\bibitem{k3} P. Komarov, F. van Breugel, and J. N. Kutz, ``A taxonomy of numerical differentiation methods,'' arXiv:2512.09090, 2025, doi: 10.48550/arXiv.2512.09090.

\bibitem{l1} P. K. Lakshmi Priya and A. Schwung, ``Data driven feedback linearization of nonlinear control systems via Lie derivatives and stacked regression approach,'' arXiv:2508.13241, 2025.

\bibitem{l2} L. Li, A. Bisoffi, C. De Persis, and N. Monshizadeh, ``Controller synthesis from noisy-input noisy-output data,'' \emph{Automatica}, vol. 183, 2026, Art. no. 112545.

\bibitem{m1} T. Martin and F. Allgöwer, ``Data-driven system analysis of nonlinear systems using polynomial approximation,'' \emph{IEEE Trans. Autom. Control}, vol. 69, no. 7, pp. 4261--4274, 2024.

\bibitem{m2} I. Markovsky and P. Rapisarda, ``Data-driven simulation and control,'' \emph{Int. J. Control}, vol. 81, no. 12, pp. 1946--1959, 2008.

\bibitem{n1} D. H. Nguyen and B. Widrow, ``Neural networks for self-learning control systems,'' \emph{IEEE Control Syst. Mag.}, vol. 10, no. 3, pp. 18--23, 1990.



\bibitem{p2} C. De Persis, D. Gadginmath, F. Pasqualetti, and P. Tesi, ``Feedback linearization through the lens of data,'' \emph{IEEE Trans. Autom. Control}, vol. 71, no. 3, pp. 1630--1643, 2026.

\bibitem{r1} S. Razani, M. Tavakoli-Kakhki, and A. Kalhor, ``Robust data-driven feedback linearization using neural network-based sparse identification of nonlinear dynamics,'' \emph{ISA Trans.}, vol. 176, pp. 67--80, 2026,

\bibitem{s1} J. Sjöberg, H. Hjalmarsson, and L. Ljung, ``Neural networks in system identification,'' \emph{IFAC Proc. Vol.}, vol. 27, no. 8, pp. 359--382, 1994.

\bibitem{s0} J. Sjöberg, Q. Zhang, L. Ljung, A. Benveniste, B. Delyon, P.-Y. Glorennec, H. Hjalmarsson, and A. Juditsky, ``Nonlinear black-box modeling in system identification: A unified overview,'' \emph{Automatica}, vol. 31, no. 12, pp. 1691--1724, 1995.

\bibitem{s2} G. Singh, M. P. Nandakumar, and S. Ashok, ``Adaptive fuzzy-PID and neural network based object tracking using a 3-axis platform,'' in \emph{Proc. 2016 IEEE Int. Conf. Eng. Technol. (ICETECH)}, 2016, pp. 1012--1017.

\bibitem{s3} S. Şahin, ``Learning feedback linearization using artificial neural networks,'' \emph{Neural Process. Lett.}, vol. 44, no. 3, pp. 625--637, 2016.

\bibitem{s4} M. G. Safonov and T. Tsao, ``The unfalsified control concept and learning,'' \emph{IEEE Trans. Autom. Control}, vol. 42, no. 6, pp. 843--847, 1997.

\bibitem{t1} M. Thieffry, A. Hache, M. Yagoubi, and P. Chevrel, ``Identification for control based on neural networks: Approximately linearizable models,'' arXiv:2409.15858, 2024.

\bibitem{u1} J. Umlauft and S. Hirche, ``Feedback linearization based on Gaussian processes with event-triggered online learning,'' \emph{IEEE Trans. Autom. Control}, vol. 65, no. 10, pp. 4154--4169, 2019.

\bibitem{v1} H. J. van Waarde, C. De. Persis, M. K. Camlibel, and P. Tesi, ``Willems's fundamental lemma for state-space systems and its extension to multiple datasets,'' \emph{IEEE Control Syst. Lett.}, vol. 4, no. 3, pp. 602--607, 2020.

 

\bibitem{x1} J. Xie, F. Bonassi, and R. Scattolini, ``Learning control affine neural NARX models for internal model control design,'' \emph{IEEE Trans. Autom. Sci. Eng.}, vol. 22, pp. 8137--8149, 2025.

\bibitem{w1} T. Westenbroek, D. Fridovich-Keil, E. Mazumdar, S. Arora, V. Prabhu, S. S. Sastry, and C. J. Tomlin, ``Feedback linearization for uncertain systems via reinforcement learning,'' in \textit{Proc. IEEE Int. Conf. Robot. Autom. (ICRA)},
2020, pp. 1364--1371.

\bibitem{w2} J. C. Willems, P. Rapisarda, I. Markovsky, and B. L. M. De Moor, ``A note on persistency of excitation,'' \emph{Syst. Control Lett.}, vol. 54, no. 4, pp. 325--329, 2005.

\bibitem{y1} A. Yeşildirek and F. L. Lewis, ``Feedback linearization using neural networks,'' \emph{Automatica}, vol. 31, no. 11, pp. 1659--1664, 1995.

\bibitem{z1} J. G. Ziegler and N. B. Nichols, ``Optimum settings for automatic controllers,'' \emph{Trans. ASME}, vol. 64, pp. 759--768, 1942.

\end{thebibliography}
\end{document}